\documentclass[lettersize,journal]{IEEEtran}
\usepackage[authoryear]{natbib}
\usepackage[utf8]{inputenc} 
\usepackage[T1]{fontenc}    
\usepackage{hyperref}       
\usepackage{url}            
\usepackage{booktabs}       
\usepackage{amsfonts}       
\usepackage{nicefrac}       
\usepackage{microtype}      
\usepackage{subcaption} 
\usepackage{graphicx}
\usepackage[linesnumbered,ruled,vlined]{algorithm2e} 
\usepackage{amsmath}   
\usepackage{amssymb}   
\usepackage{booktabs}
\usepackage{multirow}
\usepackage{caption}
\usepackage{makecell}
\usepackage{graphicx}
\usepackage{amsmath}
\usepackage{geometry}
\usepackage{amsthm}

\newtheorem{theorem}{Theorem}[section]

\theoremstyle{remark}
\newtheorem{remark}{Remark}[section]

\usepackage{booktabs, tabularx, siunitx}

\SetKwComment{Comment}{/* }{ */ \hfill}
\SetCommentSty{commentstyle}
\SetKwInput{Require}{Require}
\SetKwInput{Ensure}{Ensure}
\usepackage{siunitx}
\usepackage{etoolbox}
\usepackage{xcolor}         
\begin{document}

\title{EP-GRPO: Entropy-Progress Aligned Group Relative Policy Optimization with Implicit Process Guidance}

\author{
	\IEEEauthorblockN{Song Yu\textsuperscript{1}, Li Li\textsuperscript{1*}, Wenwen Zhao\textsuperscript{1}, Zhisheng Yang\textsuperscript{1}}\\
	\IEEEauthorblockA{
		\textsuperscript{1}\textit{School of Computer and Information Science, Southwest University, Chongqing 400715, China}\\
		yusong0929@email.swu.edu.cn\\
		lily@swu.edu.cn (Corresponding author.)\\
		zhaoww71@email.swu.edu.cn\\
		yzsheng@email.swu.edu.cn
	}
}

\maketitle
\begingroup\renewcommand\thefootnote{\textsuperscript{*}}
\footnotetext{Corresponding author.}
\endgroup

\begin{abstract}
	Reinforcement learning with verifiable rewards (RLVR), particularly Group Relative Policy Optimization (GRPO), has advanced LLM reasoning. However, GRPO suffers from three credit assignment failures: uniform token-level granularity that ignores heterogeneous informational value, uniform polarity that penalizes correct steps and rewards incorrect ones, and zero-variance collapse that erases outcome-driven gradients. We systematically quantify these failures, revealing highly non-uniform token informativeness, widespread step-level polarity misalignment, and substantial training waste.
	To address these limitations, we propose Entropy-Progress Aligned GRPO (EP-GRPO), a framework that mines the model's intrinsic information flow for dense, self-supervised guidance. EP-GRPO integrates entropy-gated modulation to prioritize high entropy decision pivots, implicit process signals from policy divergence anchored to outcome advantages for directional token-level feedback without external reward models, and cumulative entropy mapping that enables progress-aligned advantage normalization, naturally maintaining gradient flow under zero reward variance.
	Extensive experiments on mathematical reasoning benchmarks demonstrate that EP-GRPO achieves superior accuracy and efficiency compared to GRPO and its variants. The code will be available.
\end{abstract}

\begin{IEEEkeywords}
Reinforcement Learning, Large Language Model, Policy Optimization, Credit Assignment.
\end{IEEEkeywords}
\section{Introduction}
\begin{figure*}[t]
	\centering
	\includegraphics[width=\textwidth]{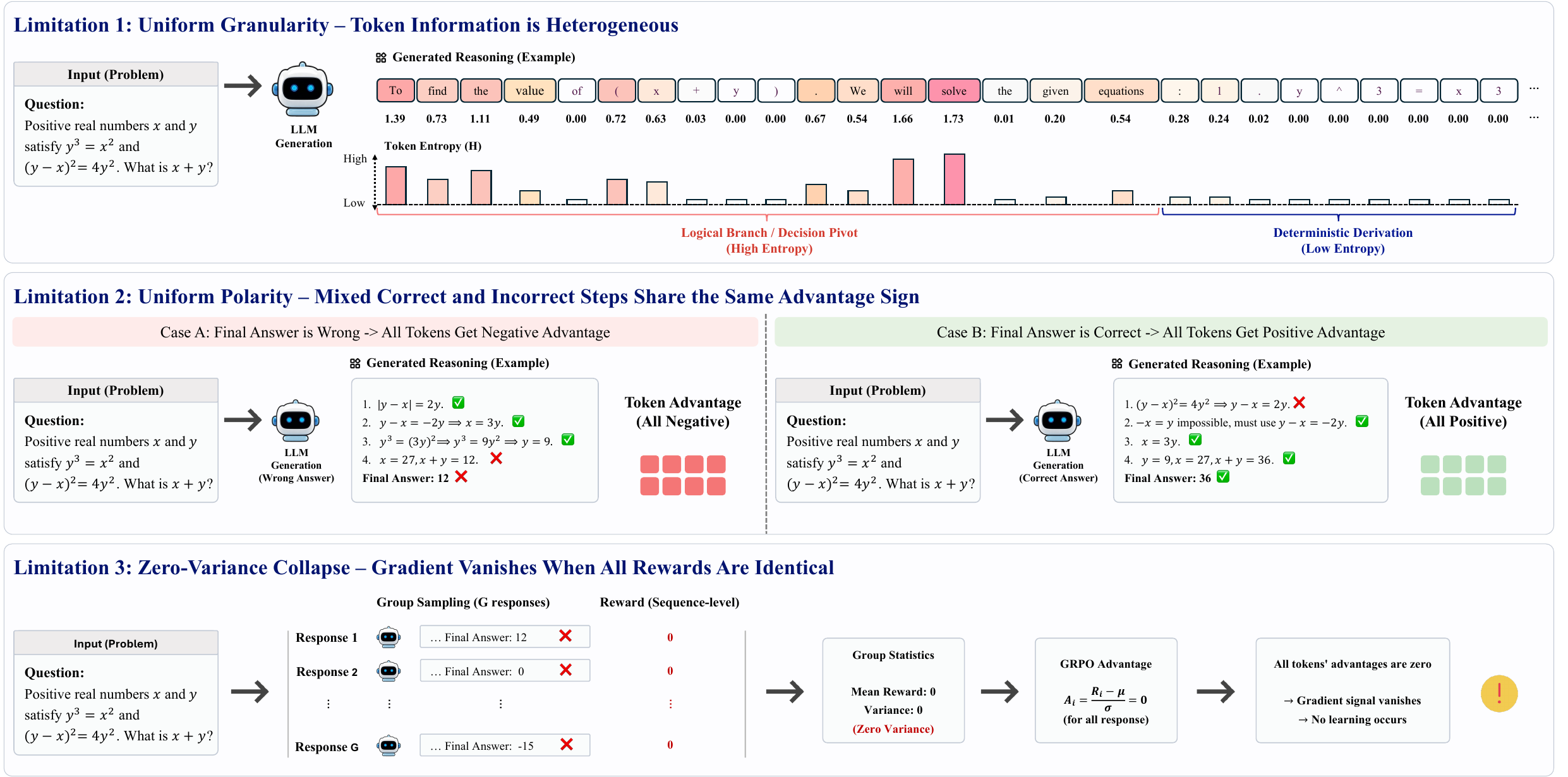}
\caption{\textbf{Conceptual illustration of the fundamental limitations in standard GRPO.} The top panel demonstrates \textbf{Uniform Granularity}, where the model fails to distinguish between critical high entropy decision pivots and deterministic low entropy derivations. The middle panel shows \textbf{Uniform Polarity}, where sequence-level rewards lead to the indiscriminate reinforcement or penalization of both correct and incorrect intermediate steps. The bottom panel illustrates \textbf{Zero-Variance Collapse}, where identical rewards within a group cause the learning signal to vanish.}
	\label{fig:intro_overview}
\end{figure*}

\IEEEPARstart{W}{ith} the advancement of large language models (LLMs), reinforcement learning with verifiable rewards (RLVR) has become the cornerstone for enhancing complex reasoning. 
The release of DeepSeek-R1 \citep{guo2025deepseek} marks the significant effectiveness of RLVR \citep{lamberttulu} in enhancing model reasoning abilities, powered by Group Relative Policy Optimization (GRPO) \citep{shao2024deepseekmath}. By employing intra-group relative advantages, GRPO improves training efficiency without a critic network. 

Despite its success, GRPO still has limitations, as shown in Figure \ref{fig:intro_overview}. First, it assumes uniform token-level advantage contribution across the entire sequence \citep{shao2024deepseekmath,guo2025deepseek}. This coarse granularity ignores the inherent heterogeneity of reasoning processes. At logical branch points, token distributions exhibit high entropy that reflects active exploration, while at deterministic derivation steps, low entropy tokens often encode stylized redundancy rather than genuine reasoning progress \citep{wangbeyond}. Recent efforts such as Process Reward Models (PRMs) \citep{lightman2023letsverifystepstep,wang2024mathshepherd} provide step-level supervision but require costly human annotations and generalize poorly across domains. Entropy-based signals \citep{kadavath2022languagemodelsmostlyknow} and length-aware penalties \citep{singhal2023longwaygo} operate at the trajectory or position level, failing to differentiate individual tokens by their actual information content. 

Second, GRPO assigns the same sign of advantage to all tokens within a sequence. Since the sequence-level reward determines whether advantage is positive or negative, every token in a successful response receives positive reinforcement, while every token in a failed response receives negative reinforcement. This uniform polarity ignores a fundamental reality: a reasoning chain invariably contains a mixture of correct and incorrect steps. Under GRPO, every one of these correct steps is indiscriminately penalized alongside genuine errors, suppressing valid reasoning patterns. Recent works such as PRPO \citep{ding2026prpo} and Step-GRPO  \citep{li2026step} employ a pretrained PRM to score individual steps and aligns process reward distributions with sequence-level outcome advantages. However, these approach still depends on an external reward model. 

Third, GRPO's advantage normalization collapses when all sampled responses within a group receive identical rewards. In this case, the normalized advantage becomes identically zero for all tokens, resulting in a complete loss of the outcome-driven gradient signal. While increasing the group size mitigates this issue, it proportionally escalates computational cost, making it impractical for resource-constrained settings. Recent efforts have addressed this from complementary angles: data-level methods such as DAPO \citep{yu2025dapo} detect and filter zero-variance groups, while signal-level approaches such as RL-ZVP \citep{le2025no} and Dr. GRPO \citep{dr_grpo} reshape advantages via entropy shaping or learned bias terms. Orthogonal work including RE-GRPO \citep{liu2025re} employs LLM-guided reflection to repair failed trajectories, at the cost of additional inference overhead. These methods either rely on extra sampling, auxiliary model calls, or heuristic advantage reshaping that does not explicitly align with logical reasoning progress.

To solve these problems, we propose \textbf{E}ntropy-\textbf{P}rogress Aligned \textbf{GRPO} (EP-GRPO), a progress-aware optimization algorithm that transforms traditionally sparse, outcome-only feedback into dense, token-level learning signals. EP-GRPO addresses three orthogonal limitations of GRPO through complementary mechanisms. First, to resolve the granularity issue, it applies entropy-gated modulation to the original sequence-level advantage. Second, to resolve the polarity issue, it leverages policy divergence as a token-level signal and anchors its polarity to the outcome advantage, yielding directional process rewards without external models. Third, to resolve the zero-variance issue, when group rewards are identical and the outcome advantage vanishes, the anchoring signal degrades to the raw reward polarity. Since the implicit signal depends on policy divergence rather than reward variance, it remains fully active and sustains gradient flow even when outcome comparisons fail. The implicit advantages are then normalized within cumulative entropy buckets to enable progress-aligned, dense rewards, and combined to form the final token-level learning signal.

Our main contributions are as follows:
\begin{itemize}
	\item We introduce an entropy-gated modulation mechanism that transforms the coarse-grained sequence-level advantage into token-level weights, prioritizing gradient updates at critical decision pivots.
	\item We propose a method to convert policy divergence into directional process rewards by anchoring its polarity with the outcome advantage, providing fine-grained token-level feedback without external reward models.
	\item We design a logical-progress alignment mechanism based on cumulative entropy mapping, which projects asynchronous reasoning trajectories into a unified coordinate system. It not only enables progress-aligned normalization of process rewards, but also ensures that the implicit process advantage provides meaningful gradient signals even when outcome-based advantages vanish due to zero reward variance.
\end{itemize}

\section{Related Work}

\subsection{Token-Level Granularity in Credit Assignment}

The coarse granularity of sequence-level rewards has been a longstanding limitation in RLVR. Recent work has begun to address this by moving toward token-level or step-level credit assignment. GTPO \citep{tan2025gtpo} proposes a true token-level reward by dynamically weighting each token with its predictive entropy, providing fine-grained credit to high entropy decision points. Similarly, research \citep{liu2025fromuniform} explicitly tailor policy optimization to each token's nature based on entropy, treating heterogeneous tokens with differentiated optimization weight rather than uniform updates. From a causal perspective, \citep{khandoga2026beyond} introduce counterfactual importance weighting, which masks reasoning spans and measures the resulting drop in answer probability to identify tokens with genuine causal impact, effectively upweighting critical tokens while suppressing filler tokens.

Exploration enhancing strategies provide another angle on token-level differentiation. SIREN \citep{jiang2025rethinking} selectively applies entropy regularization to specific logical tokens, attempting to focus exploration where it matters most. Research \citep{cheng2026reasoning} directly augment the GRPO advantage with a clipped, gradient-detached per token entropy term, encouraging exploration at pivotal tokens. In a complementary line, SEED-GRPO \citep{chen2025seed} employs semantic entropy rather than token-level entropy to quantify prompt-level uncertainty, enabling uncertainty-aware conservative updates for difficult prompts.

Despite these advances, existing token-level methods primarily focus on exploration or importance weighting. They do not address the polarity of token-level signals, whether a token’s contribution is positive or negative relative to the final outcome. Moreover, they rely on heuristic entropy shaping rather than progress-aligned signal extraction from the policy's own information dynamics.

\subsection{Process-Level Polarity and Directional Feedback}

Beyond granularity, the uniform polarity of sequence-level advantages has motivated research on process-level supervision. Explicit PRMs \citep{lightman2023letsverifystepstep,wang2024mathshepherd} train separate verifiers on human-annotated or synthetically generated reasoning steps, providing dense step-wise directional feedback. Process Advantage Verifiers \citep{setlur2024rewardingprogress} extend this by measuring the change in future correctness probability induced by each step. More recent methods such as PRPO \citep{ding2026prpo} and Step-GRPO \citep{li2026step} employ pretrained PRMs to score individual steps and align process reward distributions with sequence-level outcome advantages.

Despite these advances, explicit PRMs incur high annotation costs and frequently suffer from reward hacking or poor cross-domain generalization. In response, implicit and self-supervised process reward frameworks have emerged. PRIME \citep{cui2025prime} enables online PRM updates using only policy rollouts and outcome labels, mining implicit process signals directly from the policy's predictive dynamics. TEPO \citep{lin2025token} uses Markov likelihood to link group-level rewards to token-level optimization via token-mean aggregation, incorporating entropy control to avoid collapse. However, these implicit mechanisms still operate under trajectory-level or physical-position alignment, leaving the heterogeneous timing of logical progress unaddressed.

Work on length bias and verbosity \citep{singhal2023longwaygo} further highlights that sequence-level or position-based penalties remain insufficient because they do not capture intrinsic information content. This motivates a self-supervised process signal that provides token-level directional guidance aligned with logical progress rather than physical token position.

\subsection{Zero-Variance Collapse and Gradient Continuity}
A critical failure mode of GRPO occurs when all responses in a group receive identical rewards, causing the normalized advantage to vanish and the outcome-driven gradient signal to collapse. This issue is particularly prevalent in early stage training or on highly challenging reasoning tasks, where policies tend to produce uniformly incorrect outputs, or conversely in saturated regimes where all outputs are correct. Existing solutions fall into three categories.

Data-level interventions detect and filter uninformative groups before training. DAPO \citep{yu2025dapo} introduces Dynamic Sampling with decoupled clipping to identify and resample zero-variance groups, ensuring informative gradient updates. GRESO \citep{zheng2025greso} proposes selective rollout that predicts and skips uninformative prompts based on reward training dynamics, reducing wasted computation.

Task-level rebalancing addresses the uneven distribution of zero-variance frequency across different tasks. MT-GRPO \citep{ramesh2026multitaskgrpo} observes that zero-gradient rates vary drastically across tasks in multi-task settings, and combines a Ratio-Preserving Sampler with Improvement-Aware Task Reweighting to dynamically allocate optimization effort, substantially improving worst-case task performance.

Signal-level methods reshape the advantage computation itself to avoid collapse. RL-ZVP \citep{le2025no} constructs non-zero advantages from per-token policy entropy even under zero reward variance, enabling continued learning from otherwise wasted prompts. Dr. GRPO \citep{dr_grpo} introduces a learned bias term that perturbs the advantage calculation to prevent collapse. EDGE-GRPO \citep{zhang2025edge} combines guided error correction with entropy-driven advantage shaping to inject diversity into the advantage distribution without additional sampling. Complementary to these, RE-GRPO \citep{liu2025re} maintains a hard negative case pool and applies LLM-guided reflection with dual validation to repair failed trajectories, which are then distilled back into the policy, effectively converting collapsed groups into useful supervision at the cost of added inference.

Despite these advances, a principled signal source that remains informative under zero reward variance without external computation, and that aligns naturally with logical reasoning progress, remains an open challenge.
\section{Preliminaries}
We begin by formalizing the probabilistic framework of LLMs \citep{brown2020language}, followed by an overview of RLVR. Then, we delineate the GRPO algorithm, providing the necessary background for our proposed method.

\textbf{LLMs.} Specifically, given an input prompt $x$, an LLM $\pi_{\theta}$ sequentially generates a $T$-token response $y=(y_1,...,y_T)$:
\begin{equation}
\pi_\theta(\mathbf{y}|\mathbf{x}) = \prod_{t=1}^T \pi_\theta(y_t|\mathbf{x}, \mathbf{y}_{<t}).
\end{equation}

\textbf{RLVR.}
RLVR \citep{lamberttulu} is a family of reinforcement learning methods that utilize verifiable reward signals rather than learned reward models. Unlike RLHF \citep{ouyang2022training}, RLVR employs rule-based objective rewards, such as the correctness of a programming output, the correctness of the final answer to a mathematical problem, or compliance with formatting. These rewards originate from tasks with explicit ground truth verification.

Consider a dataset $\mathcal{D} = \{(x, y)\}$ where $x$ is the prompt and $y$ is the ground truth. The optimization objective in RLVR is to maximize the expected reward:

\begin{equation}
\mathcal{J}(\theta) = \mathbb{E}_{(x,y) \sim \mathcal{D}} \left[ \mathbb{E}_{\hat{y} \sim \pi_\theta(\cdot|x)} [R(\hat{y}, y)] \right],
\end{equation}

\noindent{where $R(\hat{y}, y)$ is a verifiable reward function that compares the generated output $\hat{y}$ against the ground truth $y$.}

\textbf{GRPO.}
GRPO \citep{shao2024deepseekmath} is a more efficient policy optimization algorithm compared with Proximal Policy Optimization (PPO) \citep{schulman2017proximal}, as it estimates advantages through group-based response sampling, eliminating the need for a separate value network. For each prompt $x$, GRPO samples $G$ responses ${\{o_1, o_2, \dots, o_G\}}$ from the policy model $\pi_{\theta}$, with each response consisting of $|o_i|$ tokens. These responses are evaluated using a reward model or function $R(x, o_i)$, yielding a reward $r_i$ for each response.

Token-level advantages $\hat{A}_{i,t}$ are computed through within-group normalization. Specifically, for all tokens in response $o_i$, the advantage is set to the normalized reward of that response:

\begin{equation}
\hat{A}_{i,t} = \frac{r_i - \text{mean}(\mathbf{r})}{\text{std}(\mathbf{r})+\delta}, \quad \forall t \in \{1,...,|o_i|\},
\end{equation}

\noindent where $\mathbf{r} = [r_1, r_2, ..., r_G]$ is the vector of rewards for all responses in the group, $\text{mean}(\cdot)$ and $\text{std}(\cdot)$ denote the mean and standard deviation operations respectively, and $\delta$ is a small constant for numerical stability. This normalization provides a relative comparison of responses within the same group, effectively estimating advantages. This assignment assumes an equal contribution of all tokens to the final outcome reward, bypassing the need for per-token value estimation.

To prevent the policy from diverging too far from the reference policy, GRPO incorporates KL divergence regularization. Specifically, for each token position, the KL divergence is estimated using a low-variance approximation \citep{schulman2020approxkl}:

\begin{equation}
	\begin{aligned}
		\mathbb{D}_{\text{KL}}[\pi_{\theta} \| \pi_{\text{ref}}] = & \frac{\pi_{\text{ref}}(o_{i,t} \mid x, o_{i,<t})}{\pi_{\theta}(o_{i,t} \mid x, o_{i,<t})} \\
		& - \log \frac{\pi_{\text{ref}}(o_{i,t} \mid x, o_{i,<t})}{\pi_{\theta}(o_{i,t} \mid x, o_{i,<t})} - 1.
	\end{aligned}
\end{equation}

This estimator provides a low-variance approximation of the KL divergence while maintaining computational efficiency.

The GRPO optimization objective combines a clipped surrogate objective with a KL divergence penalty. After each generation, multiple updates can be performed using the following loss function:

\begin{equation}
	\begin{aligned}
		\mathcal{L}_{\text{GRPO}}(\theta&) =  -\frac{1}{\sum_{i=1}^G |o_i|} \sum_{i=1}^G \sum_{t=1}^{|o_i|} \bigg[ \min \Big( \rho_{i,t}(\theta) \hat{A}_{i,t}, \\
		& \text{clip}(\rho_{i,t}(\theta), 1-\epsilon, 1+\epsilon) \hat{A}_{i,t} \Big) - \beta \mathbb{D}_{\text{KL}}[\pi_{\theta} \| \pi_{\text{ref}}] \bigg]
	\end{aligned}
\end{equation}
\noindent where $\rho_{i,t}(\theta) = \frac{\pi_{\theta}(o_{i,t} \mid x, o_{i,<t})}{\pi_{\theta_{\text{old}}}(o_{i,t} \mid x, o_{i,<t})}$ is the probability ratio between the current policy and the old policy, $\text{clip}(\cdot, 1-\epsilon, 1+\epsilon)$ constrains the probability ratio to the interval $[1-\epsilon, 1+\epsilon]$ to prevent excessively large policy updates, $\epsilon$ is the clipping hyperparameter, and $\beta$ controls the strength of KL regularization.

This loss function aims to maximize expected rewards while constraining the magnitude of policy updates, ensuring training stability. Through the combination of group-based advantage estimation and KL regularization, GRPO achieves stable policy optimization without requiring a value network, making it particularly suitable for reinforcement learning tasks with verifiable rewards. 

\section{Motivation}
\label{sec:motivation}
\begin{figure*}[t]
	\centering
	\includegraphics[width=\textwidth]{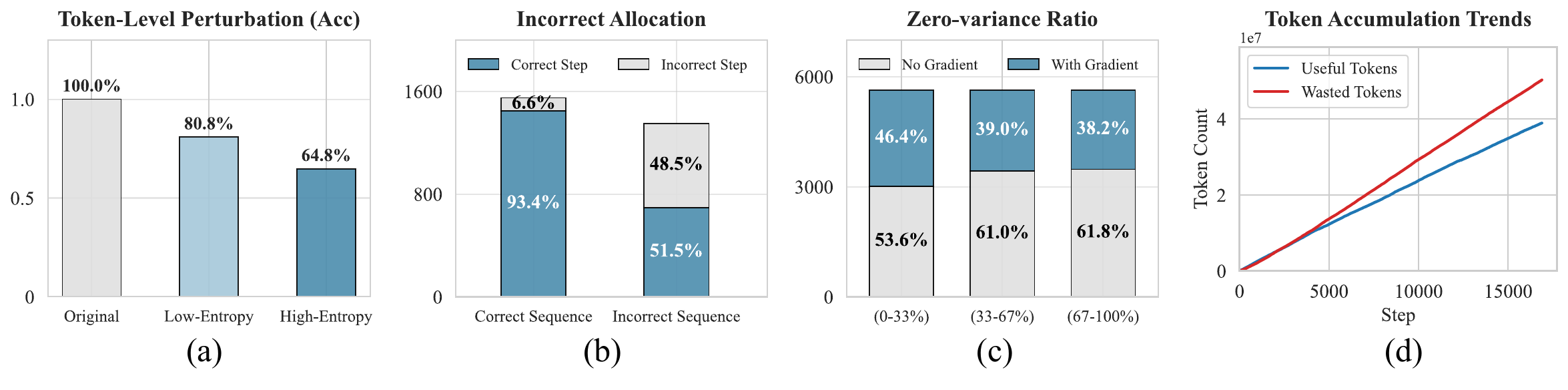}
	\caption{\textbf{Empirical analysis using standard GRPO.}
		(a) Perturbing high entropy tokens degrades sequence accuracy substantially more than perturbing low entropy tokens, highlighting their disproportionate importance.
		(b) Incorrect tokens in otherwise correct sequences frequently receive positive advantages, while correct tokens in failed sequences receive negative ones, indicating poor credit assignment.
		(c) Severe zero-variance issues persist throughout training.
		(d) During the middle-to-late stages, over 50\% of tokens belong to such zero-variance groups, rendering them ineffective for policy updates.}
	\label{fig:four_experiments}
\end{figure*}
In this section, we perform more empirical and theoretical analysis to reveal the underlying mechanisms and provide new insights.

\subsection{Contribution by Token Entropy}
To quantify the impact of token-level heterogeneity, we conducted perturbation experiments on a GRPO-trained policy. From the MATH \citep{hendrycksmath2021} dataset, we selected 2,460 problems that the model answered correctly. For each problem, we identified the 5\% of tokens with the highest predictive entropy and the 5\% with the lowest entropy in the generated reasoning sequence, randomly replaced them with alternative tokens, and measured the resulting change in final answer accuracy. As shown in Figure \ref{fig:four_experiments}(a), perturbing high entropy tokens caused a more than 3.5× larger accuracy drop compared to perturbing low entropy tokens. We hypothesize that this phenomenon occurs because high entropy tokens typically coincide with strategic decision points where the model chooses among competing reasoning paths, so perturbing them derails the entire subsequent trajectory; low entropy tokens, in contrast, largely appear in deterministic derivations where the model can often self-correct using surrounding logical context to preserve coherence.
\subsection{Polarity misalignment in the reasoning process}
To further quantify the polarity misalignment, we sampled 500 problems from the MATH dataset and generated responses using a GRPO-trained policy. To obtain unbiased step-level correctness annotations, each response was automatically segmented into atomic reasoning steps by an external LLM. Every atomic step was independently evaluated for local correctness by the LLM, crucially without access to the final answer outcome, thereby preventing hindsight bias. As shown in \ref{fig:four_experiments}(b), the results reveal a striking misalignment. Among correctly answered problems, 6.6\% of intermediate reasoning steps were locally erroneous yet received uniformly positive advantages under GRPO. More critically, among incorrectly answered problems, 51.5\% of steps were locally correct, yet these were all indiscriminately assigned negative advantages. These results demonstrate that GRPO inadvertently rewards errors in successful trajectories while suppressing valid reasoning in failed ones.
\subsection{Vanishing gradients during training}
To quantify the prevalence of zero-variance collapse, we analyzed training logs from a standard GRPO run on the MATH dataset with G=8. Across 16,882 training steps, we recorded whether the reward standard deviation within each group fell to zero. As shown in Figure \ref{fig:four_experiments}(c), this failure mode is not a rare edge case: 58.77\% of all training steps (9,922 out of 16,882) produced zero-variance groups, rendering over 46 million tokens computationally wasted, as shown in Figure \ref{fig:four_experiments}(d). Furthermore, the problem intensifies as training progresses. Figure \ref{fig:four_experiments}(c) reveals that the zero-variance ratio rises from 53.6\% in the early stage, to 61.0\% in the middle stage, and reaches 61.8\% in the late stage. This upward trend indicates that the model does not naturally outgrow this failure mode; instead, outcome-driven gradients become progressively sparser, leaving an increasing fraction of training steps guided only by the KL regularization term rather than by reward signals. \textbf{These empirical findings raise a natural question: why does GRPO suffer from these limitations?} We now turn to a theoretical analysis of GRPO's gradient dynamics.
\begin{table*}[htbp]
	\centering
	\caption{Confusion Matrix of Sign Combinations.}
	\label{tab:confusion}
	\begin{tabular}{c|c|c}
		\toprule
		& $\hat{A}_i > 0$ (Winner Group) & $\hat{A}_i < 0$ (Loser Group) \\
		\midrule
		$s_{i,t} > 0$ & \textbf{True Positive (TP)} & \textbf{False Positive (FP)} \\
		(More Certain) & Model is more confident in a correct path. & Model is more confident in a wrong path. \\
		& $\rightarrow$ \textit{Strongest Reward} & $\rightarrow$ \textit{Strongest Penalty} \\
		\midrule
		$s_{i,t} < 0$ & \textbf{False Negative (FN)} & \textbf{True Negative (TN)} \\
		(Deviated) & Model deviated from reference on a correct path. & Model deviated from reference on a wrong path. \\
		& $\rightarrow$ \textit{Mild Penalty} & $\rightarrow$ \textit{Implicit Reward} \\
		\bottomrule
	\end{tabular}
\end{table*}
\subsection{Entropy-Driven Gradient Contribution}
Policy optimization fundamentally relies on gradient updates. We therefore begin by analyzing the origins of gradient signals in sequence generation tasks. Consider policy:
\begin{equation} 
	\pi_\theta(y_t \mid x, y_{<t})
\end{equation}
and cross-entropy loss:
\begin{equation} 
	\mathcal{L}_t = -\log \pi_\theta(y_t \mid x, y_{<t})
\end{equation}

Taking the partial derivative of $\mathcal{L}_t$ with respect to the model parameters $\theta$ via the chain rule:

\begin{equation}
	\frac{\partial \mathcal{L}_t}{\partial \theta} = \frac{\partial \mathcal{L}_t}{\partial \pi_\theta} \cdot \frac{\partial \pi_\theta}{\partial \theta} = \left(-\frac{1}{\pi_\theta}\right) \cdot \frac{\partial \pi_\theta}{\partial \theta}
\end{equation}

The gradient intensity is defined as the magnitude of the contribution from the loss derivative with respect to $\pi_\theta$:

\begin{equation}
	\left\| \frac{\partial \mathcal{L}_t}{\partial \pi_\theta} \right\| = \frac{1}{\pi_\theta}
\end{equation}

This derivation reveals a fundamental fact: the magnitude of the gradient signal at each token position is determined solely by the model's prediction probability $\pi_\theta$, independent of any specific reinforcement learning algorithm. When the model is uncertain about a token, which means a high entropy token, the loss function automatically assigns a larger gradient to drive learning.

Having established the source of gradient intensity, we now examine how GRPO interacts with this inherent gradient signal. The loss function can be simplified to:

\begin{equation}
	\mathcal{L}_{\text{GRPO}} = -\frac{1}{G} \sum_{i=1}^{G} \sum_{t=1}^{T} \hat{A}_i \cdot \log \pi_\theta(y_{i,t} \mid x_i, y_{i,<t})
\end{equation}

where $\hat{A}_i$ is the advantage value for sequence $i$, computed via group-based reward normalization. Critically, $\hat{A}_i$ is a sequence-level constant, it takes the same value for all tokens $t$ within the same sequence.

Taking the partial derivative with respect to $\theta$ via the chain rule:

\begin{equation}
	\frac{\partial \mathcal{L}_{\text{GRPO}}}{\partial \theta} = -\hat{A}_i \cdot \frac{1}{\pi_\theta} \cdot \frac{\partial \pi_\theta}{\partial \theta}
\end{equation}

We can decompose this gradient as:

\begin{equation}
	\underbrace{\frac{\partial \mathcal{L}_{\text{GRPO}}}{\partial \theta}}_{\text{GRPO Gradient}} = \underbrace{\hat{A}_i}_{\text{GRPO's Contribution}} \times \underbrace{\left( -\frac{1}{\pi_\theta} \cdot \frac{\partial \pi_\theta}{\partial \theta} \right)}_{\text{Natural Gradient}}
\end{equation}

We can see that GRPO does not actively create or amplify the gradient disparity between high entropy and low entropy tokens. Instead, it passively inherits the natural gradient $1/\pi_\theta$ from the cross-entropy loss, and simply scales the entire sequence uniformly by $\hat{A}_i$. It makes no distinction between tokens based on their information content, treating critical reasoning pivots and routine deterministic tokens identically. This passive inheritance, while computationally efficient, leaves substantial room for improvement by actively modulating token-level contributions based on their informational value.

\subsection{Dense Rewards and Gradient Reversal}
\label{sec:dense_rewards}
In GRPO, all tokens within a sequence share the identical sign of $\hat{A}_i$. Correct tokens in a failed trajectory are uniformly penalized, while incorrect tokens in a successful trajectory are inadvertently rewarded. The ideal solution is token-level directional feedback—dense rewards that assign positive advantage to correct tokens and negative advantage to incorrect ones.

Existing PRMs provide step-level supervision but incur substantial annotation costs and generalize poorly across domains. A free process signal is needed. Consider parameterizing an outcome reward as the log-likelihood ratio between the current policy and a reference model: $r_\theta(y) = \beta \log \frac{\pi_\theta(y)}{\pi_{\text{ref}}(y)}$. As shown in prior work~\citep{cui2025prime}, training with this reward implicitly learns a Q-function of the form:
\begin{equation}
	\begin{aligned}
		q_\theta^t(y_{<t}, y_t) &= \sum_{i=1}^t \beta \log \frac{\pi_\theta(y_i \mid y_{<i})}{\pi_{\text{ref}}(y_i \mid y_{<i})} \\
		&= \beta \log \mathbb{E}_{\pi_{\text{ref}}(y \mid y_{<t})} [e^{r_\theta(y)}].
	\end{aligned}
\end{equation}
From this, the per-step process reward emerges by taking the difference of adjacent Q-functions:
\begin{equation}
	\begin{aligned}
		r_t^p &= q_\theta^t - q_\theta^{t-1} = \beta \log \frac{\pi_\theta(y_t \mid y_{<t})}{\pi_{\text{ref}}(y_t \mid y_{<t})} \\
		&\quad + \beta \log \frac{\pi_\theta(y_{t-1} \mid y_{<t-1})}{\pi_{\text{ref}}(y_{t-1} \mid y_{<t-1})}.
	\end{aligned}
\end{equation}

Thus, the token-level log-probability divergence $s_{i,t} = \log \pi_\theta(o_{i,t} \mid x, o_{i,<t}) - \log \pi_{\text{ref}}(o_{i,t} \mid x, o_{i,<t})$ constitutes a principled, self-supervised process signal requiring no external annotations. Intuitively:
\begin{itemize}[nosep]
	\item $s_{i,t} > 0$: The model is more confident than the reference model.
	\item $s_{i,t} < 0$: The model is less certain than the reference model.
\end{itemize}

However, $s_{i,t}$ lacks intrinsic directionality. Increased confidence is beneficial only when converging toward a correct solution, but harmful when reinforcing an error. The sequence-level outcome advantage $\hat{A}_i$ provides the missing directional context. Table~\ref{tab:confusion} summarizes the four interpretable regimes arising from the interaction of $\text{sign}(\hat{A}_i)$ and $\text{sign}(s_{i,t})$, yielding a symbolically anchored dense reward without external supervision. Taken together, these analyses point to a clear direction: amplify token-level advantages at high entropy decision pivots while suppressing redundant low entropy derivations; leverage the self-supervised process signal $s_{i,t}$ for directional token-level feedback; and exploit its independence from reward variance to naturally resolve zero-variance collapse. These three principles form the foundation of our proposed method.
\section{Method}
\label{sec:method}
\begin{figure*}[t]
	\centering
	\includegraphics[width=\textwidth]{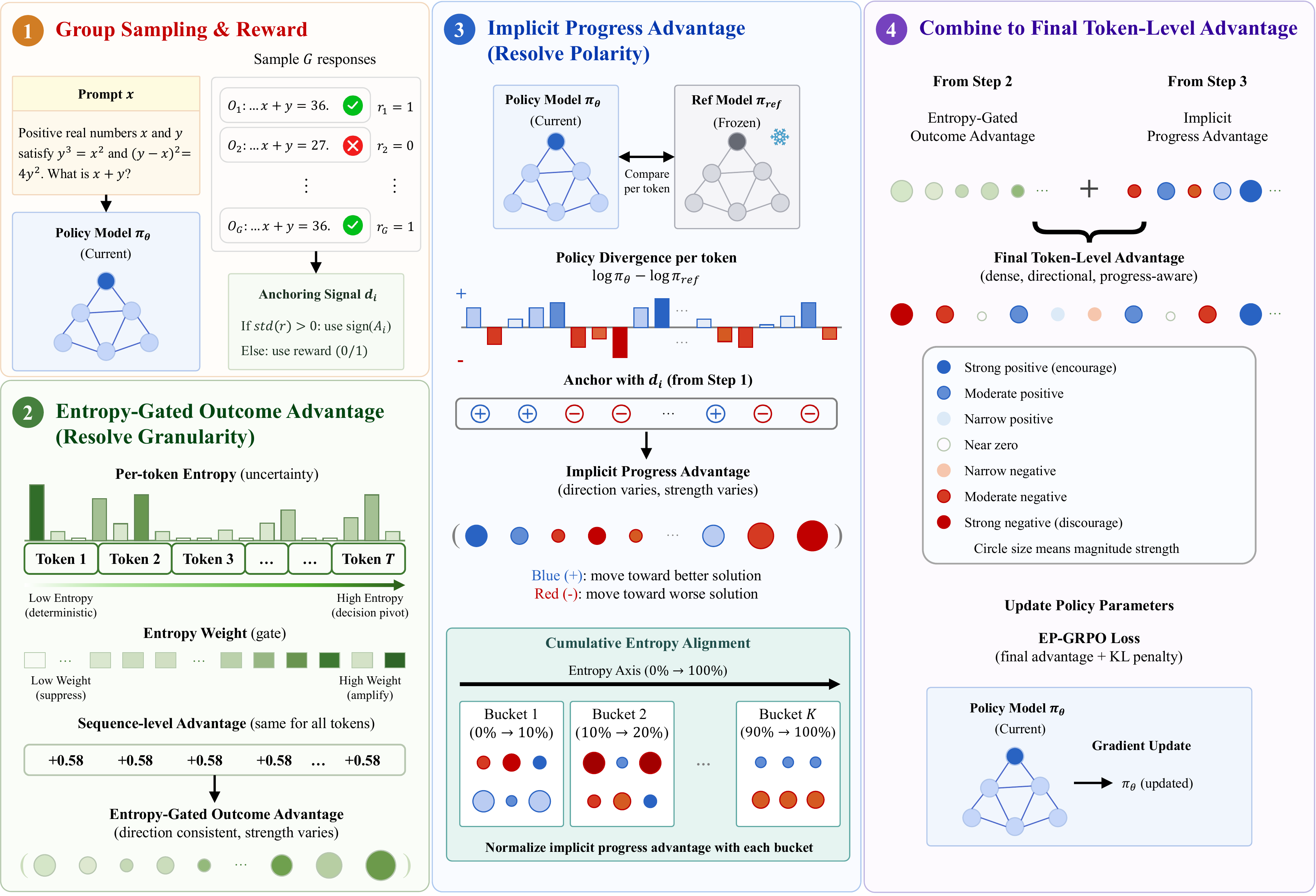}
	\caption{\textbf{Overview of EP-GRPO.}
		EP-GRPO transforms sparse outcome rewards into dense, token-level learning signals. It first applies entropy-gated modulation to assign higher weights to important tokens while keeping a consistent direction. Then, an implicit progress signal is computed by comparing the policy with a reference model, providing token-level feedback with varying magnitude and direction, further aligned via cumulative entropy-based bucketing. Finally, both signals are combined into a unified token-level advantage for stable policy optimization.}
	\label{fig:method_overview}
\end{figure*}
Building on the design principles established in Section~\ref{sec:motivation}, we now introduce EP-GRPO, as shown in Figure \ref{fig:method_overview}. Our method involves three components: entropy-gated outcome advantage modulation, an implicit progress advantage derived from policy divergence, and their combination under a unified token-level objective.

\subsection{Entropy-Gated Outcome Advantage}
Given an input prompt $x$, the policy $\pi_\theta$ generates a response $o_i$ of length $|o_i|$. For each token $o_{i,t}$, we compute the predictive entropy:
\begin{equation}
	H_{i,t} = -\sum_{v \in \mathcal{V}} \pi_\theta(v \mid x, o_{i,<t}) \log \pi_\theta(v \mid x, o_{i,<t}),
\end{equation}
where $\mathcal{V}$ denotes the vocabulary. High $H_{i,t}$ indicates uncertainty at logical branch points, while low $H_{i,t}$ corresponds to deterministic derivations.

To differentiate high entropy decision pivots from low entropy redundancy, we introduce an entropy-gated modulation. For a group of $G$ responses, we first compute the group-level mean and standard deviation of entropy across all tokens:
\begin{equation}
	\mu_H = \frac{1}{\sum_{i=1}^G |o_i|} \sum_{i=1}^G \sum_{t=1}^{|o_i|} H_{i,t}.
\end{equation}
\begin{equation}
	\sigma_H = \sqrt{\frac{1}{\sum_{i=1}^G |o_i|} \sum_{i=1}^G \sum_{t=1}^{|o_i|} (H_{i,t} - \mu_H)^2}.
\end{equation}
Each token's entropy is then standardized and mapped through a sigmoid gate to produce a weight $W_{i,t}$:
\begin{equation}
	W_{i,t} = \sigma\!\left( \gamma \cdot \frac{H_{i,t} - \mu_H}{\sigma_H + \epsilon} \right),
\end{equation}
where $\sigma(\cdot)$ is the sigmoid function, $\gamma > 0$ controls the steepness of the gate, $\epsilon$ ensures numerical stability.

The entropy-gated outcome advantage is then:
\begin{equation}
	\hat{A}_{i,t}^{\text{outcome}} = W_{i,t} \cdot \hat{A}_i.
\end{equation}
This modulation preserves the original sign of $\hat{A}_i$ while adjusting magnitude to reflect token-level informational content, amplifying gradients at critical decision points and suppressing redundant derivations.

\subsection{Implicit Progress Advantage}
The entropy-gated outcome advantage still inherits the uniform polarity of $\hat{A}_i$. To provide token-level directional feedback, we leverage the policy divergence against a frozen reference model $\pi_{\text{ref}}$:
\begin{equation}
	s_{i,t} = \lambda\left(\log \pi_\theta(o_{i,t} \mid x, o_{i,<t}) - \log \pi_{\text{ref}}(o_{i,t} \mid x, o_{i,<t})\right),
\end{equation}
where $\lambda > 0$ scales the sensitivity. As derived in Section~\ref{sec:dense_rewards}, $s_{i,t}$ constitutes a self-supervised process signal requiring no external annotations.

Since $s_{i,t}$ intrinsically lacks directionality, it must be anchored to an external sign. We define the anchoring signal $d_i$ as:
\begin{equation}
	d_i = \begin{cases}
		\text{sign}(\hat{A}_i), & \text{if } \text{std}(\mathbf{r}) > 0, \\
		\text{sign}(r_i - \theta_{\text{reward}}), & \text{if } \text{std}(\mathbf{r}) = 0,
	\end{cases}
\end{equation}
where $\theta_{\text{reward}}$ is a reward threshold (typically 0.5 for binary rewards). When group reward variance is non-zero, the outcome advantage provides directional context as in Table~\ref{tab:confusion}. When variance collapses, $\hat{A}_i$ vanishes and the anchor gracefully degrades to the raw reward, ensuring $s_{i,t}$ always receives meaningful polarity. The anchored signal is:
\begin{equation}
	\tilde{s}_{i,t} = d_i \cdot s_{i,t}.
\end{equation}

A direct global normalization of $\tilde{s}_{i,t}$ would be statistically unsound, as the distribution of $s_{i,t}$ varies systematically with reasoning phase. We introduce a progress-aligend normalization strategy named \textbf{\textit{logical progress buckets}} defined by cumulative entropy. For each response, we compute the cumulative entropy and map it to a relative information progress $\tau_{i,t} \in [0, 1]$:
\begin{equation}
	S_{i,t} = \sum_{k=1}^t H_{i,k}, \quad \tau_{i,t} = S_{i,t} \,/\, S_{i,|o_i|}.
\end{equation}
Tokens from all group responses are partitioned into $K$ equal-width buckets based on $\tau_{i,t}$. Within each bucket $\mathcal{B}_k$, we compute the Z-score of $\tilde{s}_{i,t}$:
\begin{equation}
	\tilde{s}_{i,t}^{\text{norm}} = \frac{\tilde{s}_{i,t} - \mu_k}{\sigma_k + \epsilon}, \quad \forall (i,t) \in \mathcal{B}_k,
\end{equation}
where $\mu_k$ and $\sigma_k$ are the bucket mean and standard deviation. This aligns tokens by logical advancement rather than physical position: tokens at similar reasoning stages are compared against each other regardless of their absolute sequence index.

The implicit progress advantage is then:
\begin{equation}
	\hat{A}_{i,t}^{\text{progress}} = \eta \cdot \tilde{s}_{i,t}^{\text{norm}},
\end{equation}
where $\eta > 0$ controls its contribution relative to the outcome advantage.
\subsection{Final Token-Level Advantage}
The complete token-level advantage combines both components:
\begin{equation}
	\hat{A}_{i,t}^{\text{final}} = \hat{A}_{i,t}^{\text{outcome}} + \hat{A}_{i,t}^{\text{progress}}.
\end{equation}

When $\text{std}(\mathbf{r}) > 0$, both components are active: entropy-gated modulation resolves the granularity issue, and the anchored implicit signal provides token-level polarity. When $\text{std}(\mathbf{r}) = 0$, the outcome advantage $\hat{A}_{i,t}^{\text{outcome}}$ vanishes, but $\hat{A}_{i,t}^{\text{progress}}$ remains fully operative. Its anchor degrades to the raw reward $r_i$, and its magnitude depends solely on policy divergence. The zero-variance collapse is thus resolved without additional sampling, auxiliary models, or heuristic advantage shaping.

\section{Theoretical Analysis}
\label{sec:analysis}

We establish a theoretical properties of EP-GRPO. We prove that its gradient is equivalent to that of a regularized GRPO objective, providing a principled interpretation of the optimization dynamics. 

\subsection{Gradient Equivalence and Implicit Regularization}

We begin by establishing the relationship between the EP-GRPO gradient and the original GRPO objective. For analytical simplicity, we consider the typical case where $\text{std}(\mathbf{r}) > 0$.

\begin{theorem}
	\label{thm:gradient_equivalence}
	The EP-GRPO gradient is equivalent to the gradient of the GRPO objective augmented with an entropy-weighted squared log-ratio regularization term. Formally, there exists a potential function $F(\pi_\theta)$ such that:
	\begin{equation}
		\nabla_\theta \mathcal{J}_{\text{EP-GRPO}}(\theta) = \nabla_\theta \mathcal{J}_{\text{GRPO}}(\theta) + \eta \cdot \nabla_\theta F(\pi_\theta),
	\end{equation}
	where $F(\pi_\theta) = \frac{\beta}{2} \mathbb{E}_{i,t}\left[ \Lambda_{i,t} \left( \log \frac{\pi_\theta(o_{i,t})}{\pi_{\text{ref}}(o_{i,t})} \right)^2 \right]$, and $\Lambda_{i,t}$ captures the entropy-gated modulation and sign anchoring.
\end{theorem}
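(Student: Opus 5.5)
We work in the simplified policy-gradient form used in Section~\ref{sec:motivation}, where the surrogate is $-\sum_{i,t}\hat{A}_{i,t}\log\pi_\theta(o_{i,t})$. We drop clipping by evaluating at $\rho_{i,t}=1$, i.e.\ at $\theta=\theta_{\text{old}}$, where the clip is inactive. The argument rests on one standing convention: all advantages are computed from rollouts and then \emph{detached}. This covers $W_{i,t}$, $d_i$, the bucket statistics $\mu_k,\sigma_k$, and $\hat{A}_i$. The EP-GRPO gradient is then linear in the per-token advantage:
\begin{equation}
\nabla_\theta\mathcal{J}_{\text{EP-GRPO}}=\mathbb{E}_{i,t}\big[(W_{i,t}\hat{A}_i+\eta\,\tilde{s}^{\text{norm}}_{i,t})\nabla_\theta\log\pi_\theta(o_{i,t})\big].
\end{equation}

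The plan is to split this into two pieces and treat each separately.

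\textbf{Outcome piece.} The term $W_{i,t}\hat{A}_i\nabla\log\pi_\theta$ is a reweighting of the GRPO gradient. To match $\nabla\mathcal{J}_{\text{GRPO}}$ exactly, I would either absorb the gate into the definition of $\mathcal{J}_{\text{GRPO}}$, read as the entropy-gated outcome surrogate, or put the residual $(W_{i,t}-1)\hat{A}_i$ into $\Lambda_{i,t}$. I prefer the first reading and will state this choice explicitly, since the theorem calls $\Lambda$ the carrier of ``entropy-gated modulation''.

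\textbf{Progress piece.} Write $\ell_{i,t}=\log\frac{\pi_\theta(o_{i,t})}{\pi_{\text{ref}}(o_{i,t})}$, so that $s_{i,t}=\lambda\ell_{i,t}$ and $\nabla_\theta\ell_{i,t}=\nabla_\theta\log\pi_\theta(o_{i,t})$, because $\pi_{\text{ref}}$ is frozen. With detached bucket statistics, the bucket normalization is affine:
\begin{equation}
\tilde{s}^{\text{norm}}_{i,t}=\frac{d_i\lambda}{\sigma_k+\epsilon}\,\ell_{i,t}-\frac{\mu_k}{\sigma_k+\epsilon}.
\end{equation}
The multiplicative part gives a term of the form $c_{i,t}\,\ell_{i,t}\nabla\ell_{i,t}=\tfrac{c_{i,t}}{2}\nabla(\ell_{i,t}^2)$, with $c_{i,t}=d_i\lambda/(\sigma_k+\epsilon)$. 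Comparing with $F$, this forces
\begin{equation}
\Lambda_{i,t}=\frac{2\,d_i\lambda}{\beta(\sigma_{k(i,t)}+\epsilon)},
\end{equation}
possibly multiplied by $W_{i,t}$ if one wants the gate to appear explicitly. $\Lambda_{i,t}$ is thus a detached token coefficient that combines sign anchoring ($d_i$), progress-bucket scaling, and the entropy dependence through the cumulative-entropy bucket index. Summing over $(i,t)$ gives $\eta\,\nabla F$.

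\textbf{Main obstacle: the additive offset.} The shift $-\mu_k/(\sigma_k+\epsilon)$ contributes $-\eta\,\mathbb{E}[\tfrac{\mu_k}{\sigma_k+\epsilon}\nabla\log\pi_\theta]$. This term is not of squared-log-ratio form. I would first try to show it vanishes in expectation: it is a constant-per-token baseline, and $\mathbb{E}_{o_t\sim\pi_\theta}[\nabla\log\pi_\theta(o_t)]=0$ at each prefix. The subtlety is that $\mu_k$ is computed from the same samples. I would handle this by treating the bucket statistics as fixed constants, the usual leave-one-out/large-group argument, which makes the equivalence hold in expectation. Failing that, I would add the linear term $-\frac{\mu_k}{\sigma_k+\epsilon}\ell_{i,t}$ into $F$, completing the square to obtain $\frac{\beta}{2}\Lambda_{i,t}(\ell_{i,t}-m_{i,t})^2$ up to a $\theta$-independent constant. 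The statement then holds with a recentred log-ratio. I would flag this as the one place where the claim requires the ``in expectation'' qualifier.
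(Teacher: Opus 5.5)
Your proposal follows the paper's proof. Both split $\hat{A}^{\text{final}}_{i,t}$ linearly, freeze the normalisation statistics, use $\ell_{i,t}\nabla_\theta\ell_{i,t}=\tfrac12\nabla_\theta(\ell_{i,t}^2)$ with $\nabla_\theta\ell_{i,t}=\nabla_\theta\log\pi_\theta(o_{i,t})$, and complete the square. Your reading of $\mathcal{J}_{\text{GRPO}}$ as the gated surrogate is exactly the paper's $\mathcal{J}^{\text{gated}}_{\text{GRPO}}$. You are more careful than the paper. It folds the bucket $Z$-score into $\Lambda_{i,t}$ as if it were purely multiplicative, so it never confronts the offset $-\mu_k/(\sigma_k+\epsilon)$ that you isolate. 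It also writes $\beta$ where $\lambda$ belongs, and puts $\eta$ inside $\Lambda_{i,t}$ while also keeping it outside $F$.

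Three corrections.

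(i) Your first attempt to remove the offset as a baseline fails, even with $\mu_k,\sigma_k$ frozen. The bucket index of token $(i,t)$ is set by $\tau_{i,t}=S_{i,t}/S_{i,|o_i|}$. The denominator $S_{i,|o_i|}$ contains $H_{i,t+1},H_{i,t+2},\dots$ and depends on the length, all of which depend on the sampled token $o_{i,t}$. The offset is therefore action-dependent, so the identity $\mathbb{E}_{o_t\sim\pi_\theta}[\nabla_\theta\log\pi_\theta(o_t)]=0$ does not dispose of it, and a leave-one-out argument does not repair this. Your fallback is the right resolution: for the detached surrogate the identity holds exactly with $F=\frac{\beta}{2}\mathbb{E}_{i,t}[\Lambda_{i,t}(\ell_{i,t}-m_{i,t})^2]$ and $m_{i,t}=\mu_{k(i,t)}/(d_i\lambda)$. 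State plainly that the uncentred square in the statement is obtained only by neglecting $\mu_k$, as the paper tacitly does.

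(ii) Since $\nabla_\theta F=\beta\,\mathbb{E}_{i,t}[\Lambda_{i,t}\ell_{i,t}\nabla_\theta\ell_{i,t}]$, matching $\eta\,c_{i,t}\ell_{i,t}\nabla_\theta\ell_{i,t}$ gives $\Lambda_{i,t}=d_i\lambda/(\beta(\sigma_k+\epsilon))$, not twice that.

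(iii) You cannot optionally multiply $\Lambda_{i,t}$ by $W_{i,t}$, nor move $(W_{i,t}-1)\hat{A}_i$ into it. The progress advantage carries no gate, and the outcome residual carries no factor $\ell_{i,t}$, so either change alters $\nabla_\theta F$. Entropy enters $\Lambda_{i,t}$ only through the bucket index. The paper's Remark, which says $\Lambda_{i,t}$ inherits $W_{i,t}$, makes the same conflation.
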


\begin{proof}
	Recall from Section~\ref{sec:method} that the EP-GRPO token-level advantage decomposes as:
	\begin{equation}
		\hat{A}_{i,t}^{\text{final}} = \hat{A}_{i,t}^{\text{outcome}} + \hat{A}_{i,t}^{\text{progress}} = W_{i,t} \hat{A}_i + \eta \cdot \tilde{s}_{i,t}^{\text{norm}}.
	\end{equation}
	
	The policy gradient for EP-GRPO is therefore:
	\begin{equation}
		\nabla_\theta \mathcal{J}_{\text{EP-GRPO}} = \mathbb{E}\left[ \sum_{i,t} \nabla_\theta \log \pi_\theta(o_{i,t}) \cdot \hat{A}_{i,t}^{\text{final}} \right].
	\end{equation}
	
	Substituting the decomposition and isolating the progress advantage term yields:
	\begin{equation}
		\nabla_\theta \mathcal{J}_{\text{EP-GRPO}} = \nabla_\theta \mathcal{J}_{\text{GRPO}}^{\text{gated}} + \eta \cdot \mathbb{E}\left[ \sum_{i,t} \nabla_\theta \log \pi_\theta(\cdot) \cdot \tilde{s}_{i,t}^{\text{norm}} \right],
	\end{equation}
	where $\mathcal{J}_{\text{GRPO}}^{\text{gated}}$ denotes the GRPO objective with entropy-gated weighting.
	
	Under the assumption $\text{std}(\mathbf{r}) > 0$, the anchored signal is $\tilde{s}_{i,t} = \text{sign}(\hat{A}_i) \cdot \beta (\log \pi_\theta - \log \pi_{\text{ref}})$. During gradient computation, bucket-level normalization statistics $\mu_k$, $\sigma_k$ are treated as constants computed from the old policy $\pi_{\theta_{\text{old}}}$. Let $\Lambda_{i,t}$ denote the combined scaling factor absorbing $\text{sign}(\hat{A}_i)$, the normalization constant, and $\eta$. The progress gradient term becomes:
	\begin{equation}
		\begin{aligned}
			\Delta \nabla_\theta = \beta \cdot \mathbb{E}\biggl[ \sum_{i,t} \Lambda_{i,t} \cdot& \nabla_\theta \log \pi_\theta(o_{i,t}) \biggl( \log \pi_\theta(o_{i,t})\\
			& - \log \pi_{\text{ref}}(o_{i,t}) \biggr) \biggr].
		\end{aligned}
	\end{equation}
	
	Using the identity $\nabla_\theta \log \pi \cdot \log \pi = \frac{1}{2} \nabla_\theta (\log \pi)^2$, we obtain:
	\begin{equation}
		\begin{aligned}
			\Delta \nabla_\theta = \beta \cdot \mathbb{E}&\Bigg[ \sum_{i,t} \Lambda_{i,t} \cdot \nabla_\theta \Big( \frac{1}{2} (\log \pi_\theta(o_{i,t}))^2 \\
			&- \log \pi_{\text{ref}}(o_{i,t}) \log \pi_\theta(o_{i,t}) \Big) \Bigg].
		\end{aligned}
	\end{equation}
	
	Completing the square, this is exactly the gradient of:
	\begin{equation}
		F(\pi_\theta) = \frac{\beta}{2} \mathbb{E}_{i,t}\left[ \Lambda_{i,t} \left( \log \frac{\pi_\theta(o_{i,t})}{\pi_{\text{ref}}(o_{i,t})} \right)^2 \right],
	\end{equation}
	which completes the proof.
\end{proof}

\begin{remark}
	Theorem~\ref{thm:gradient_equivalence} reveals that EP-GRPO implicitly penalizes large deviations from the reference policy, with penalty strength modulated by $\Lambda_{i,t}$. Crucially, $\Lambda_{i,t}$ inherits the entropy-gated weight $W_{i,t}$, meaning tokens at critical decision points (high entropy) receive stronger regularization than those in deterministic derivations (low entropy). This transforms the passive, uniform KL constraint of standard GRPO into an active, precision-guided mechanism that stabilizes exploration precisely where it matters most.
\end{remark}

\section{Experiments}
\subsection{Experimental Setup}

\begin{table*}
	\centering
	\small 
	\rmfamily
	\caption{\textnormal{Performance comparison across reasoning benchmarks. Acc denotes sample accuracy over all responses and Fmt denotes format correctness (boxed rate). Sampling temperature is set to 1.0, and the maximum output length is limited to 2048 tokens. Each problem is sampled 16 times to reduce variance. Best results are in bold and second-best are underlined.}}
	\label{tab:qwen_results_full}
	\renewcommand{\arraystretch}{1.2}
	\setlength{\tabcolsep}{3.5pt}
	
	\begin{tabular}{l cc cc cc cc cc | cc}
		\toprule
		& \multicolumn{2}{c}{\textbf{Math 500}}
		& \multicolumn{2}{c}{\textbf{AMC 23}} 
		& \multicolumn{2}{c}{\textbf{Minerva}} 
		& \multicolumn{2}{c}{\textbf{AIME 24}} 
		& \multicolumn{2}{c}{\textbf{AIME 25}} 
		& \multicolumn{2}{c}{\textbf{Avg}} \\
		\cmidrule(lr){2-3} \cmidrule(lr){4-5} \cmidrule(lr){6-7} \cmidrule(lr){8-9} \cmidrule(lr){10-11} \cmidrule(lr){12-13}
		\textbf{Model} 
		& Acc & Fmt
		& Acc & Fmt
		& Acc & Fmt
		& Acc & Fmt
		& Acc & Fmt
		& Acc & Fmt \\
		\midrule
		
		\textit{Commercial Models} \\
		
		DeepSeek-R1-671B-0528
		& 47.03 & 58.28
		& 33.91 & 33.91
		& 11.41 & 34.38
		& 13.54 & 13.54
		& 11.04 & 11.04
		& 23.39 & 30.23 \\
		
		Qwen3-235B-A22b-Instruct
		& 72.03 & 87.19
		& 47.81 & 55.00
		& 17.66 & 92.34
		& 24.58 & 29.58
		& 16.88 & 20.21
		& 35.79 & 38.40 \\
		
		\midrule
		
		\textit{Qwen2.5-3B} \\
		
		Base
		& 31.56 & 73.12
		& 15.94 & 74.38
		& 3.13 & 65.94
		& 2.08 & 79.38
		& 0.83 & 80.00
		& 10.71 & 74.56 \\
		
		GRPO
		& 50.94 & 98.28
		& 28.28 & 96.56
		& \underline{6.25} & 99.22
		& \underline{4.58} & 91.88
		& 0.63 & 95.21
		& 18.14 & 96.23 \\
		
		- Higher Temp
		& 51.41 & 97.50
		& \underline{32.19} & 96.09
		& 5.78 & 98.12
		& 4.17 & 92.71
		& 1.04 & 97.08
		& 18.92 & 96.30 \\
		
		- More Rollouts
		& \underline{55.16} & 99.22
		& 30.31 & 96.25
		& 5.78 & 98.28
		& 2.92 & 92.71
		& \underline{1.46} & 97.50
		& \underline{19.13} & 96.79 \\
		
		EP-GRPO
		& \textbf{56.88} & 99.06
		& \textbf{39.53} & 95.31
		& \textbf{9.06} & 96.88
		& \textbf{6.25} & 90.00
		& \textbf{2.92} & 97.50
		& \textbf{22.93} \textbf{\textit{(+26.4\%)}} & 95.75 \\
		
		\midrule
		
		\textit{Qwen2.5-7B} \\
		
		Base
		& 29.06 & 79.53
		& 23.13 & 82.34
		& 3.13 & 77.03
		& 3.96 & 80.42
		& 1.88 & 84.58
		& 12.23 & 80.78 \\
		
		GRPO
		& \underline{61.25} & 99.22
		& \underline{47.19} & 96.88
		& \underline{13.13} & 98.59
		& \underline{8.75} & 94.17
		& \underline{5.21} & 96.04
		& \underline{27.11} & 96.98 \\
		
		EP-GRPO
		& \textbf{64.53} & 99.53
		& \textbf{49.38} & 97.03
		& \textbf{15.47} & 99.06
		& \textbf{12.92} & 94.38
		& \textbf{9.38} & 96.67
		& \textbf{30.34} \textbf{\textit{(+11.9\%)}} & 97.33 \\
		
		\bottomrule
	\end{tabular}
\end{table*}

\textbf{Datasets and Evaluation Benchmarks.} 
For reinforcement learning, we used the Skywork-OR1-RL-Data \citep{he2025skywork}, a curated dataset of 105K math problems with verified answers. From this dataset, we randomly sampled 8,000 problems for training. To evaluate generalization and reasoning capabilities, we tested on five competitive mathematics benchmarks: MATH500 \citep{hendrycksmath2021}, AMC23 \citep{amc23}, Minerva \citep{lewkowycz2022solving}, AIME24 \citep{aime24} and AIME25 \citep{aime25}. All benchmarks were evaluated using accuracy, format rate and pass@$k$ metrics.

\textbf{Model Configurations.} 
We evaluated EP-GRPO on two scales of the Qwen2.5 series \citep{qwen2025qwen25}: 3B and 7B. To ensure training efficiency, we employed Low-Rank Adaptation (LoRA) \citep{hu2022lora} with rank $r=32$ and $\alpha_{\text{lora}}=64$, targeting all linear layers in both the attention and feed-forward modules. For EP-GRPO, we set the implicit signal scale $\lambda=0.1$, progress advantage weight $\eta=0.2$, number of cumulative entropy buckets $K=10$, entropy gate steepness $\gamma=5.0$ and reward threshold $\theta_{\text{reward}}=0.5$. 

\textbf{Training Specifications.} 
All models were trained for 1,000 steps using the TRL framework \citep{vonwerra2020trl} with a maximum sequence length of 2,048 tokens. We used a group size of $G=8$ rollouts per prompt with temperature $T=1.0$ and top-$p=0.95$ for sampling. The learning rate was set to $5\times 10^{-6}$ with a 0.1 warmup ratio and linear decay, optimized using AdamW with weight decay $0.001$. The $\beta$ of KL divergence is set to 0.001. Training employed a global batch size of 16. All experiments were conducted on NVIDIA RTX PRO 6000 Blackwell 96GB GPUs with a fixed random seed 42.

\subsection{Baselines}
To comprehensively evaluate EP-GRPO, we compare against the following configurations:

\begin{itemize}
	\item \textbf{Base Model.} The pre-trained Qwen2.5 model without any fine-tuning, serving as the performance lower bound.
	
	\item \textbf{Commerical Model.} A high-parameter model that can be accessed through a user interface has been launched.
	
	\item \textbf{GRPO.} The standard Group Relative Policy Optimization \citep{shao2024deepseekmath} under identical rollout and reward settings. 
	
	\item \textbf{Higer Temp.} Increases the sampling temperature from $1.0$ to $1.2$ while keeping $G=8$.
	\item \textbf{More Rollouts.} Increases the rollout from $G=8$ to $G=10$ while keeping $T=1.0$.
	
	\item \textbf{Ablation Configurations.} To assess the contribution of each EP-GRPO component, we evaluate four stripped-down variants on Qwen2.5-3B, each isolating a specific mechanism:
	\begin{itemize}
		\item \textbf{+EG}: GRPO with entropy gating only, scaling sequence advantage to token-level advantage.
		\item \textbf{+IPS}: GRPO with implicit progress signals and cumulative entropy bucketing.
		\item \textbf{+EG+IPS}: GRPO with entropy gating and implicit progress signals (including bucketing), but without zero-variance degradation.
		\item \textbf{+IPS+ZVD}: GRPO with implicit progress signals (including bucketing) and zero-variance degradation, without entropy gating.
	\end{itemize}
\end{itemize}
\begin{table*}[t]
	\centering
	\caption{\textnormal{Comprehensive $pass@k$ evaluation ($k \in \{4,8,16\}$).}}
	\label{tab:qwen_pass_k_updated}
	\renewcommand{\arraystretch}{1.2}
	\setlength{\tabcolsep}{2.2pt}
	
	\resizebox{\textwidth}{!}{ 
		\rmfamily
		\begin{tabular}{l | ccc | ccc | ccc | ccc | ccc}
			\toprule
			& \multicolumn{3}{c|}{\textbf{Math 500}}
			& \multicolumn{3}{c|}{\textbf{AMC 23}} 
			& \multicolumn{3}{c|}{\textbf{Minerva}} 
			& \multicolumn{3}{c|}{\textbf{AIME 24}} 
			& \multicolumn{3}{c}{\textbf{AIME 25}} \\
			\textbf{Model} 
			& @4 & @8 & @16 
			& @4 & @8 & @16 
			& @4 & @8 & @16 
			& @4 & @8 & @16 
			& @4 & @8 & @16 \\
			\midrule
			
			\textit{Commercial Models} \\
			
			DeepSeek-R1-671B-0528
			& 66.63 & 69.33 & 70.00
			& 58.72 & 66.19 & 72.50
			& 20.94 & 23.78 & 25.00
			& 28.07 & 32.14 & 33.33
			& 23.68 & 28.82 & 33.33 \\
			
			Qwen3-235B-A22B-Instruct
			& 76.06 & 78.07 & 80.00
			& 59.66 & 63.00 & 65.00
			& 21.44 & 23.40 & 25.00
			& 28.32 & 30.00 & 33.33
			& 22.21 & 24.22 & 26.67 \\
			
			\midrule
			
			\textit{Qwen2.5-3B} \\
			
			Base
			& 78.16 & 95.28 & 99.79
			& 50.15 & 75.27 & 94.00
			& 11.95 & 22.54 & 40.20
			& 8.10 & 15.61 & 28.99
			& 3.30 & 6.52 & 12.72 \\
			
			GRPO
			& 94.26 & 99.68 & \textbf{100.00}
			& 73.64 & 93.12 & 99.55
			& \underline{22.80} & \underline{40.50} & \underline{64.84}
			& \underline{17.16} & \underline{31.49} & \underline{53.37}
			& 2.48 & 4.93 & 9.69 \\
			
			- Higher Temp
			& 94.48 & 99.70 & \textbf{100.00}
			& \underline{78.95} & \underline{95.62} & \underline{99.82}
			& 21.24 & 38.07 & 61.88
			& 15.70 & 29.04 & 49.95
			& 4.11 & 8.09 & 15.65 \\
			
			- More Rollouts
			& \underline{96.00} & \underline{99.85} & \textbf{100.00}
			& 76.51 & 94.54 & 99.72
			& 21.24 & 38.07 & 61.88
			& 11.20 & 21.23 & 38.20
			& \underline{5.72} & \underline{11.17} & \underline{21.25} \\
			
			EP-GRPO
			& \textbf{96.58} & \textbf{99.89} & \textbf{100.00}
			& \textbf{86.71} & \textbf{98.26} & \textbf{99.97}
			& \textbf{31.68} & \textbf{53.44} & \textbf{78.54}
			& \textbf{22.82} & \textbf{40.56} & \textbf{64.99}
			& \textbf{11.20} & \textbf{21.23} & \textbf{38.20} \\
			
			\midrule
			
			\textit{Qwen2.5-7B} \\
			
			Base
			& 74.78 & 93.70 & 99.62
			& 65.17 & 87.96 & 98.60
			& 11.95 & 22.54 & 40.20
			& 14.96 & 27.79 & 48.15
			& 7.31 & 14.15 & 26.49 \\
			
			GRPO
			& \underline{97.78} & \underline{99.95} & \textbf{100.00}
			& \underline{92.29} & \underline{99.42} & \textbf{100.00}
			& \underline{43.12} & \underline{67.77} & \underline{89.77}
			& \underline{30.75} & \underline{52.20} & \underline{77.45}
			& \underline{19.32} & \underline{35.02} & \underline{58.10} \\
			
			EP-GRPO
			& \textbf{98.44} & \textbf{99.98} & \textbf{100.00}
			& \textbf{93.49} & \textbf{99.59} & \textbf{100.00}
			& \textbf{49.03} & \textbf{74.14} & \textbf{93.44}
			& \textbf{42.60} & \textbf{67.21} & \textbf{89.47}
			& \textbf{32.64} & \textbf{54.78} & \textbf{79.84} \\
			
			\bottomrule
		\end{tabular}
	}
\end{table*}

\subsection{Main Results}
To validate the effectiveness of EP-GRPO across different model scales, we compare EP-GRPO against all baselines on Qwen2.5-3B and Qwen2.5-7B.

Table~\ref{tab:qwen_results_full} reports the sample accuracy (Acc) and format correctness (Fmt) of all methods across five mathematical reasoning benchmarks. EP-GRPO consistently outperforms standard GRPO at both model scales. At the 3B scale, EP-GRPO achieves the highest accuracy on all five datasets, improving the average accuracy from GRPO's 18.14\% to 22.93\%. At the 7B scale, EP-GRPO also achieves the highest accuracy on five datasets, with the average accuracy increasing from 27.11\% to 30.34\%. Notably, at the 3B scale, EP-GRPO even surpasses DeepSeek-R1-671B on AMC23 (39.53\% vs.\ 33.91\%) and approaches it on Minerva (9.06\% vs.\ 11.41\%), demonstrating the substantial potential of improved credit assignment in RLVR under limited parameter budgets.

Comparisons with two GRPO variants further rule out alternative explanations. Higher Temp and More Rollouts fail to consistently outperform standard GRPO across most datasets and are significantly inferior to EP-GRPO in all cases. This confirms that the performance gains of EP-GRPO stem not from increased exploration or more extensive sampling, but from the fundamental improvements in credit assignment enabled by entropy-gated modulation and implicit process rewards.

For commercial models, we observe that under the 2,048 token generation limit, a fraction of their outputs are truncated before producing a final answer, resulting in lower than expected pass rates on complex reasoning tasks. EP-GRPO achieves correct answers using substantially shorter reasoning paths.

Table~\ref{tab:qwen_pass_k_updated} reports the pass@k ($k \in \{4,8,16\}$) scores for all methods. EP-GRPO achieves the best results across all datasets and model scales. We find that on medium-difficulty datasets such as MATH500 and AMC23, GRPO and its variants exhibit rapid accuracy growth as the number of samples increases, narrowing the gap with EP-GRPO at high $k$ values. However, on high-difficulty datasets such as AIME24 and AIME25, the advantage of EP-GRPO widens substantially with larger $k$ (3B: AIME25 pass@16 38.20\% vs.\ 9.69\%; 7B: AIME25 pass@16 79.84\% vs.\ 58.10\%). This indicates that EP-GRPO's implicit process rewards and progress-aligned normalization enable the model to discover correct reasoning paths that standard GRPO fails to identify, particularly on challenging problems.

\subsection{Ablation Study}

Table~\ref{tab:ablation} reports the ablation results on Qwen2.5-3B, with each variant isolating one or two core components of EP-GRPO. Overall, every component contributes positive gains in average accuracy, but their contributions exhibit notable specialization across datasets.

On datasets with abundant non-zero-variance training steps, such as AMC23 and Minerva, the full EP-GRPO with all three components working in concert achieves the absolute best performance (AMC23: 39.53\% vs.\ the best ablation 36.09\%; Minerva: 9.06\% vs.\ the best ablation 7.97\%). On AIME24, +IPS achieves 6.04\%, close to the full EP-GRPO's 6.25\%. On AIME25, +EG  and +IPS and ZVD both achieve 2.08\%, within one percentage point of the full EP-GRPO's 2.92\%.

This functional specialization across datasets aligns with the nature of the problems each component addresses. The implicit progress signal primarily resolves polarity misalignment, and its benefits are most pronounced on high difficulty, long chain reasoning datasets such as AIME24 and Minerva. Zero-variance degradation targets gradient collapse, playing a critical role in later training stages and on prompts where all rollouts receive identical rewards. Entropy gating provides magnitude modulation that filters noisy signals at low entropy tokens, complementing both of the above mechanisms. The fact that no single ablated variant dominates across all benchmarks confirms that the three components are non-redundant and collectively necessary for robust performance.

\begin{figure*} 
	\centering
	\caption{\textnormal{Comparison of training trends over all datasets, smoothed with EMA ($\alpha=0.2$).}}
	\label{fig:main_training_curves}
	\begin{subfigure}{\textwidth}
		\centering
		\includegraphics[width=0.95\linewidth]{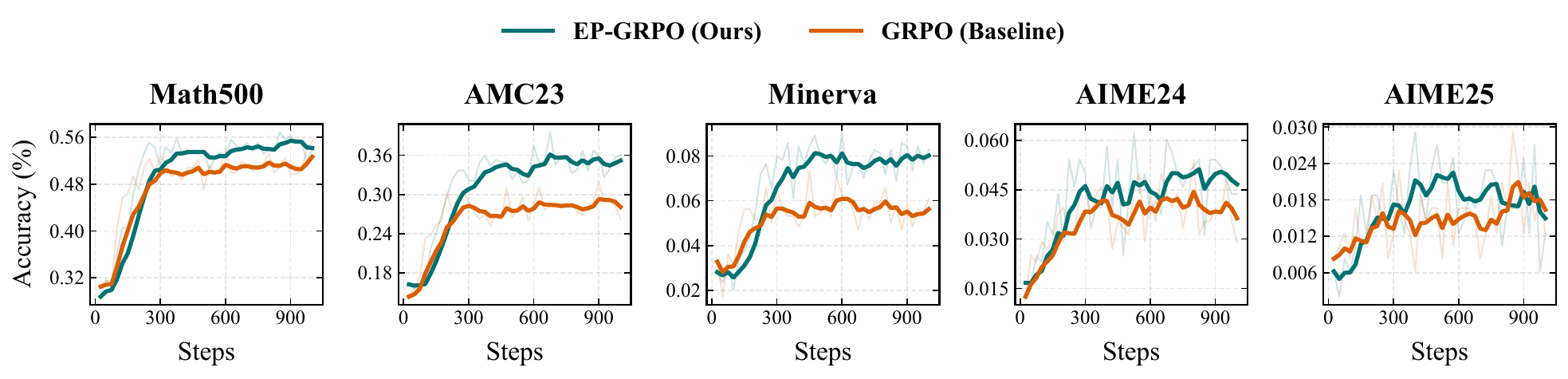}
		\caption{Training dynamics of 3B models.}
		\label{fig:curves_3b}
	\end{subfigure}

	\begin{subfigure}{\textwidth}
		\centering
		\includegraphics[width=0.95\linewidth]{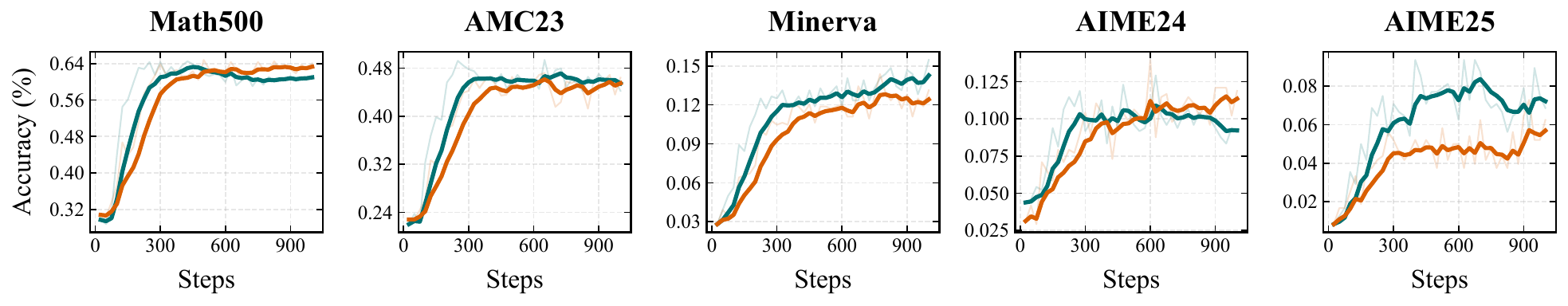}
		\caption{Training dynamics of 7B models.}
		\label{fig:curves_7b}
	\end{subfigure}
	
\end{figure*}

\subsection{Training Dynamics}

Figure~\ref{fig:main_training_curves} visualizes the accuracy trends of GRPO and EP-GRPO over training steps at both model scales. At the 7B scale, EP-GRPO exhibits a stable advantage over GRPO from the earliest training steps. At the 3B scale, EP-GRPO begins to consistently outperform GRPO after approximately 200 steps and progressively widens the gap.

This discrepancy may be attributed to the initial reasoning capability of the base model. The 7B model already possesses a certain level of mathematical reasoning ability at initialization, allowing EP-GRPO's implicit process rewards to provide effective token-level directional feedback from the early stages. The 3B model, by contrast, requires a warm-up period of approximately 200 steps to establish reliable entropy estimates and implicit signals before these mechanisms can take full effect. This interpretation is corroborated by the reward curves in Figure~\ref{fig:tensorboard}, where the reward advantage of EP-GRPO emerges and widens following the same temporal pattern.

\subsection{Training Monitoring Metrics}
\begin{table*}[t]
	\centering
	\small 
	\rmfamily
	\caption{\textnormal{Ablation study on Qwen2.5-3B.  Acc denotes sample accuracy and Fmt denotes format correctness (boxed rate). Best results are in bold.}}
	\label{tab:ablation}
	\renewcommand{\arraystretch}{1.2}
	\setlength{\tabcolsep}{3.5pt}
	
	\begin{tabular}{l cc cc cc cc cc | cc}
		\toprule
		& \multicolumn{2}{c}{\textbf{Math 500}}
		& \multicolumn{2}{c}{\textbf{AMC 23}} 
		& \multicolumn{2}{c}{\textbf{Minerva}} 
		& \multicolumn{2}{c}{\textbf{AIME 24}} 
		& \multicolumn{2}{c}{\textbf{AIME 25}} 
		& \multicolumn{2}{c}{\textbf{Avg}} \\
		\cmidrule(lr){2-3} \cmidrule(lr){4-5} \cmidrule(lr){6-7} \cmidrule(lr){8-9} \cmidrule(lr){10-11} \cmidrule(lr){12-13}
		\textbf{Model} 
		& Acc & Fmt
		& Acc & Fmt
		& Acc & Fmt
		& Acc & Fmt
		& Acc & Fmt
		& Acc & Fmt \\
		\midrule
		
		Base
		& 31.56 & 73.12
		& 15.94 & 74.38
		& 3.13 & 65.94
		& 2.08 & 79.38
		& 0.83 & 80.00
		& 10.71 & 74.56 \\
		
		GRPO
		& 50.94 & 98.28
		& 28.28 & 96.56
		& 6.25 & 99.22
		& 4.58 & 91.88
		& 0.63 & 95.21
		& 18.14 & 96.23 \\
		
		\midrule
		
		\textit{Single Component} \\
		
		+EG
		& 52.66 & 98.59
		& 32.03 & 96.56
		& 6.72 & 98.12
		& 4.58 & 92.29
		& 2.08 & 95.42
		& 19.61 \textbf{\textit{(-14.5\%)}} & 96.20 \\
		
		+IPS
		& 54.06 & 98.75
		& 31.72 & 96.09
		& 6.88 & 96.72
		& 6.04 & 90.83
		& 1.25 & 95.62
		& 19.99 \textbf{\textit{(-12.8\%)}} & 95.60 \\
		
		\midrule
		
		\textit{Two Components} \\
		
		+EG+IPS
		& 52.81 & 98.44
		& 36.09 & 97.19
		& 7.97 & 98.44
		& 4.79 & 92.71
		& 0.83 & 97.50
		& 20.50 \textbf{\textit{(-10.6\%)}}& 96.86 \\
		
		+IPS+ZVD
		& 54.84 & 99.53
		& 35.63 & 95.62
		& 7.97 & 96.56
		& 4.17 & 88.75
		& 2.08 & 96.67
		& 20.94 \textbf{\textit{(-9.5\%)}} & 95.43 \\
		
		\midrule
		
		\textit{Full Model} \\
		
		EP-GRPO (All)
		& \textbf{56.88} & 99.06
		& \textbf{39.53} & 95.31
		& \textbf{9.06} & 96.88
		& \textbf{6.25} & 90.00
		& \textbf{2.92} & 97.50
		& \textbf{22.93} & 95.75 \\
		
		\bottomrule
	\end{tabular}
\end{table*}

Figure~\ref{fig:tensorboard} further presents the trends of key monitoring metrics throughout training. EP-GRPO achieves higher average rewards than GRPO at both model scales, with the reward advantage progressively widening during the middle and late stages of training.

In terms of KL divergence, EP-GRPO maintains slightly higher values than GRPO with occasional spikes, particularly in the early stages of training. This behavior is consistent with the design of the implicit progress advantage: by providing token-level directional feedback derived from policy divergence $s_{i,t}$, EP-GRPO actively encourages the policy to deviate from the reference model at high entropy decision points where the implicit signal is most informative. The occasional KL spikes coincide with training steps where the model encounters unusual or particularly challenging reasoning patterns, triggering stronger token-level corrections. Importantly, despite these transient increases, the KL divergence does not exhibit unbounded growth, confirming that the entropy-gated modulation effectively constrains the regularization to critical tokens rather than allowing uncontrolled deviation.

Regarding gradient norms, EP-GRPO exhibits larger values than GRPO with occasional sharp spikes. This is a direct consequence of the zero-variance degradation mechanism: on training steps where $\text{std}(\mathbf{r}) = 0$, standard GRPO produces near-zero gradient norms because the outcome advantage vanishes, whereas EP-GRPO sustains non-zero gradients through the implicit process advantage $\hat{A}_{i,t}^{\text{progress}}$. These sustained gradients manifest as larger overall gradient norms and, when accumulated across multiple consecutive zero-variance steps, can produce the observed spikes. Rather than indicating training instability, these larger gradient norms reflect EP-GRPO's ability to continue learning from batches that would otherwise be wasted.
\begin{figure*} [t]
	\centering
	\caption{Comparison of training dynamics between EP-GRPO and GRPO.}
	\label{fig:tensorboard}
	\begin{subfigure}{\textwidth}
		\centering
		\includegraphics[width=0.95\linewidth]{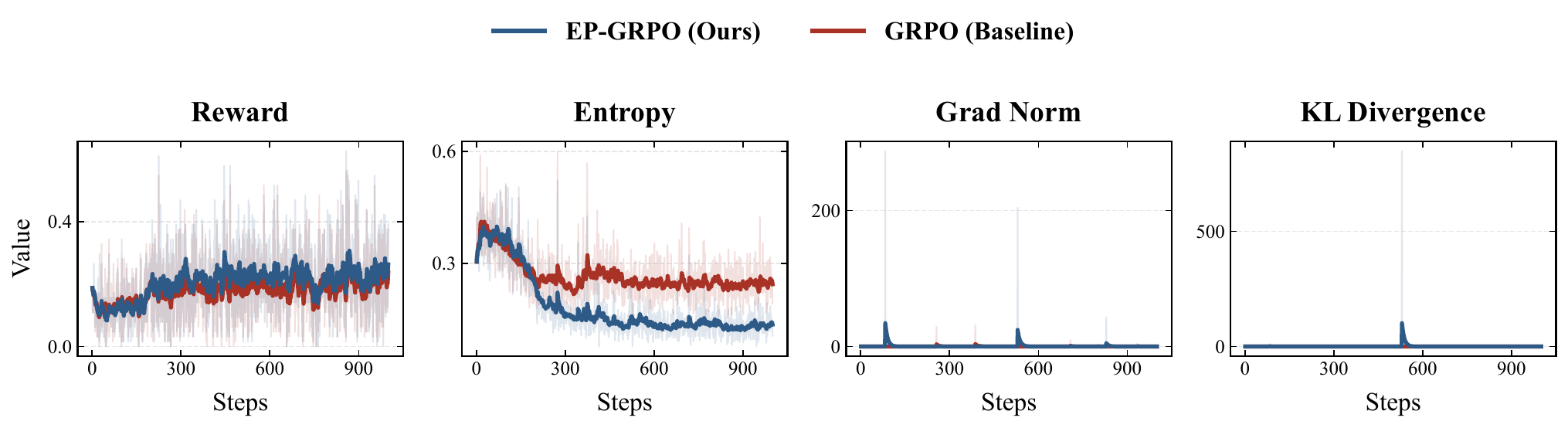}
		\caption{Training dynamics on Qwen2.5-3B.}
		\label{fig:curves_3b}
	\end{subfigure}
	\vspace{0.1em} 
	
	\begin{subfigure}{\textwidth}
		\centering
		\includegraphics[width=0.95\linewidth]{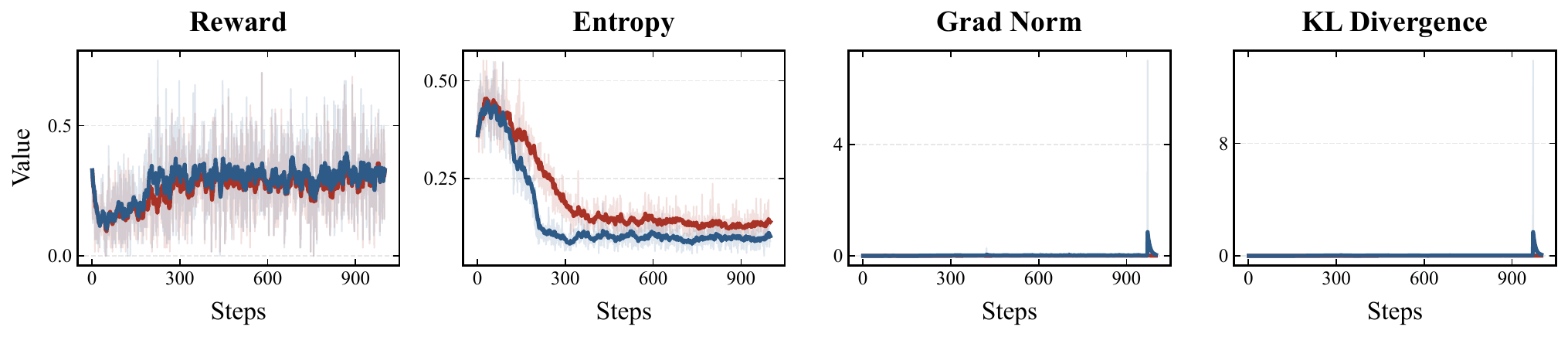}
		\caption{Training dynamics on Qwen2.5-7B.}
		\label{fig:curves_7b}
	\end{subfigure}
\end{figure*}
\begin{figure*}[t]
	\centering
	\includegraphics[width=0.8\textwidth]{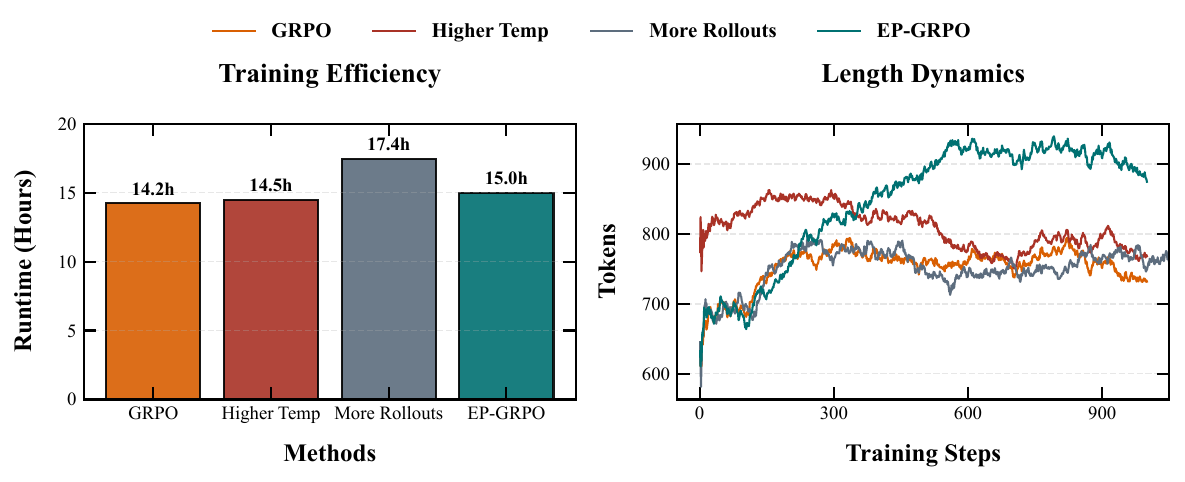}
	\caption{Training efficiency and output length comparison on Qwen2.5-3B.}
	\label{fig:efficiency}
\end{figure*}

\subsection{Computational Efficiency}

Figure~\ref{fig:efficiency} compares the training time and average output length of each method on Qwen2.5-3B. The total training wall-clock time of EP-GRPO is on par with that of standard GRPO. The additional computational overhead introduced by EP-GRPO is negligible, as all required quantities are either already computed during the standard GRPO training pipeline or can be derived with lightweight operations. More Rollouts incurs substantially longer training time due to the increased number of responses generated during the sampling phase.

In terms of output length, EP-GRPO produces reasoning chains that are approximately 20\% longer on average than those of GRPO, indicating that the model engages in deeper and more thorough reasoning. Taken together with the comparable training time, EP-GRPO achieves a superior accuracy efficiency trade-off, it obtains significantly higher accuracy through longer reasoning chains, without incurring additional temporal cost.

\section{Conclusion}
\label{sec:conclusion}

This paper identified three fundamental credit assignment failures in GRPO for RLVR, uniform token granularity, uniform polarity, and zero-variance collapse. We proposed EP-GRPO, a framework that addresses them through entropy-gated modulation, implicit process signals anchored to outcome advantages, and cumulative entropy mapping for progress-aligned normalization. Extensive experiments on two model scales across five mathematical reasoning benchmarks demonstrated consistent and substantial improvements over standard GRPO, with ablation studies confirming the non-redundant contribution of each component and control experiments ruling out alternative explanations. In the future, we will explore more possibilities for process rewards and the laws of entropy-driven exploration.

\bibliographystyle{plain}

\bibliography{cas-refs}

\newpage

\vfill

\end{document}